\documentclass[11pt, a4paper, 
onecolumn, 
showabstract]{naverlabseurope-onecol-flex}

\setcitestyle{authoryear,round}
\usepackage[utf8]{inputenc}
\usepackage[T1]{fontenc}
\usepackage{amsmath,amssymb,amsthm}
\usepackage{hyperref}
\usepackage{enumitem}
\usepackage{tikz}
\usetikzlibrary{arrows.meta,decorations.markings}

\newtheorem{theorem}{Theorem}
\newtheorem{proposition}[theorem]{Proposition}
\newtheorem{corollary}[theorem]{Corollary}

\theoremstyle{definition}

\newtheorem{example}[theorem]{Example}
\theoremstyle{remark}
\newtheorem{remark}[theorem]{Remark}

\newcommand{\KL}[2]{\mathrm{KL}\!\left(#1 \,\|\, #2\right)}
\newcommand{\E}{\mathbb{E}}
\newcommand{\R}{\mathbb{R}}
\newcommand{\cM}{\mathcal{M}}
\newcommand{\cF}{\mathcal{F}}

\authors{Marc Dymetman\textsuperscript{*}}
\affiliations{$^{*}$Scientific Consultant}
\website{\texttt{marc.dymetman@gmail.com}}
\websiteref{}

\title{Exponential families from a single KL identity}
\titlerunning{Exponential families from a single KL identity}

\begin{abstract}
Exponential families encompass the distributions central to modern machine learning --- softmax, Gaussians, and Boltzmann distributions --- and underlie the theory of variational inference, entropy-regularized reinforcement learning, and RLHF. We isolate a simple identity for exponential families that expresses the KL difference $\mathrm{KL}(q \| p_{\lambda_2}) - \mathrm{KL}(q \| p_{\lambda_1})$ in terms of the log-partition function $A(\lambda)$ and the moment $\mu_q$. Remarkably, this identity together with the single fact that $\mathrm{KL} \geq 0$ (with equality iff $p = q$) suffices, by direct substitution and rearrangement, to derive a cluster of results that are classically obtained by separate, heavier arguments: a generalized three-point identity for arbitrary reference distributions, Pythagorean theorems for I-projections and reverse I-projections, convexity of the log-partition function, identification of its Legendre dual in KL terms, the Gibbs variational principle, and the explicit optimizer in KL-regularized reward maximization, including the exponential tilting formula underlying entropy-regularized control and RLHF. Beyond these purely algebraic consequences, standard analytic arguments recover the gradient formula for the log-partition function, the Bregman representation of within-family KL divergence, and the surjectivity of the moment map. The note is self-contained.
\end{abstract}

\begin{document}

\maketitle

\section{Setup}

Let $Y$ be a finite or countable set, and let $a$ be a strictly positive probability distribution on $Y$ (the \emph{base distribution}).
Let $\phi \colon Y \to \R^d$ be a function (the \emph{sufficient statistic}).
For $\lambda \in \R^d$, define the \emph{partition function}
\[
    Z_\lambda = \sum_{y \in Y} a(y)\, e^{\lambda \cdot \phi(y)},
\]
the \emph{log-partition function} $A(\lambda) = \log Z_\lambda$,
and the \emph{natural parameter space} $\Lambda = \{\lambda \in \R^d : Z_\lambda < \infty\}$.
When $Y$ is finite, $\Lambda = \R^d$.
For $\lambda \in \Lambda$, the \emph{exponential family member} with natural parameter $\lambda$ is
\[
    p_\lambda(y) = \frac{a(y)\,e^{\lambda \cdot \phi(y)}}{Z_\lambda}.
\]
Note that $p_0 = a$ and $A(0) = 0$.
Since $a(y) > 0$ for all $y \in Y$, we have $p_\lambda(y) > 0$ for all $y$ and $\lambda \in \Lambda$.
For any distribution $q$ on $Y$ such that $\sum_{y} q(y)|\phi_j(y)| < \infty$ for all $j$, we write
\[
    \mu_q = \sum_{y \in Y} q(y)\,\phi(y) \in \R^d.
\]
When $q = p_\lambda$, we write $\mu_\lambda = \mu_{p_\lambda}$; this is always finite when $Y$ is finite.
We refer to \cite{Brown1986,BarndorffNielsen1978,WainwrightJordan2008} for classical treatments of exponential families.

\section{The identity}\label{sec:identity}

\begin{proposition}[KL difference identity]\label{prop:identity}
Let $\lambda_1, \lambda_2 \in \Lambda$ and let $q$ be a distribution on $Y$ such that $\mu_q$ exists.
If $Y$ is finite, then
\begin{equation}\label{eq:identity}
    \boxed{\KL{q}{p_{\lambda_2}} - \KL{q}{p_{\lambda_1}}
    = A(\lambda_2) - A(\lambda_1) + \mu_q \cdot (\lambda_1 - \lambda_2)}
\end{equation}
holds unconditionally.
If $Y$ is countably infinite, it holds whenever at least one of $\KL{q}{p_{\lambda_1}}$, $\KL{q}{p_{\lambda_2}}$ is finite; in that case both are finite.
\end{proposition}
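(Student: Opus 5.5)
The plan is to compute the log-ratio of the two family members pointwise and then average it against $q$. Since $p_{\lambda_1}(y),p_{\lambda_2}(y)>0$, for every $y\in Y$ we have
\[
\log\frac{p_{\lambda_1}(y)}{p_{\lambda_2}(y)} = (\lambda_1-\lambda_2)\cdot\phi(y) - A(\lambda_1) + A(\lambda_2),
\]
because the base factor $a(y)$ cancels. Formally, $\KL{q}{p_{\lambda_2}} = \sum_y q(y)\log\frac{q(y)}{p_{\lambda_1}(y)} + \sum_y q(y)\log\frac{p_{\lambda_1}(y)}{p_{\lambda_2}(y)}$. The second sum equals $(\lambda_1-\lambda_2)\cdot\mu_q - A(\lambda_1)+A(\lambda_2)$, using $\sum_y q(y)=1$. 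Rearranging gives \eqref{eq:identity}. The convention $0\log 0=0$ means only $y$ with $q(y)>0$ contribute; on those terms everything is well defined.

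For finite $Y$, all sums are finite sums of real numbers. The splitting of a sum of sums is therefore trivially justified, and both KL values are finite, since $p_{\lambda_i}>0$ everywhere. This settles the unconditional claim.

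The main obstacle is the countable case: splitting $\sum_y q(y)[\log(q/p_{\lambda_1}) + \log(p_{\lambda_1}/p_{\lambda_2})]$ into two sums requires absolute summability, or at least avoiding $\infty-\infty$. I would handle it as follows.

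\begin{itemize}
\item The term $g(y)=q(y)\log\frac{p_{\lambda_1}(y)}{p_{\lambda_2}(y)}$ is absolutely summable. Indeed $|g(y)|\le q(y)\bigl(|\lambda_1-\lambda_2|\,\|\phi(y)\| + |A(\lambda_1)-A(\lambda_2)|\bigr)$, which is summable because $\mu_q$ exists (each $\sum_y q|\phi_j|<\infty$) and $A(\lambda_i)$ are finite since $\lambda_i\in\Lambda$.
\item Write $f_i(y)=q(y)\log\frac{q(y)}{p_{\lambda_i}(y)}$. The negative part of $f_i$ is always summable, by the standard bound $x\log(x/y)\ge x-y$, so $f_i^-\le p_{\lambda_i}$. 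Hence $\KL{q}{p_{\lambda_i}}=\sum_y f_i(y)\in[0,\infty]$ is well defined.
\item Since $f_2=f_1+g$ pointwise and $g$ is absolutely summable, $f_1$ is absolutely summable if and only if $f_2$ is. So if one KL is finite (equivalently, its $f_i$ is absolutely summable), so is the other.
\item Linearity of absolutely convergent series then yields \eqref{eq:identity}.
\end{itemize}
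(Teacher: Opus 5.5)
Your proposal is correct and follows the paper's route: the log-ratio $\log(p_{\lambda_1}/p_{\lambda_2})$ is affine in $\phi$, so its $q$-expectation is finite, equals $(\lambda_1-\lambda_2)\cdot\mu_q + A(\lambda_2)-A(\lambda_1)$, and gives the identity after rearranging. In the countable case you go further than the paper, which only notes that the difference of the two KLs equals a finite quantity: your bookkeeping ($g$ absolutely summable, $f_i^-$ summable via $x\log(x/y)\ge x-y$, and $f_2=f_1+g$ pointwise) supplies the rigorous justification for that step.
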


\begin{proof}
Since $p_\lambda(y) > 0$ for all $y$, the log-ratio of two exponential family members is well-defined and affine in $\phi$:
\[
    \log \frac{p_{\lambda_1}(y)}{p_{\lambda_2}(y)}
    = (\lambda_1 - \lambda_2) \cdot \phi(y) + A(\lambda_2) - A(\lambda_1).
\]
Taking the expectation under $q$ (finite since $\mu_q$ exists):
\begin{equation}\label{eq:logratio}
    \E_q\!\left[\log \frac{p_{\lambda_1}}{p_{\lambda_2}}\right]
    = (\lambda_1 - \lambda_2) \cdot \mu_q + A(\lambda_2) - A(\lambda_1).
\end{equation}
The identity then follows from
\[
    \KL{q}{p_{\lambda_2}} - \KL{q}{p_{\lambda_1}}
    = \E_q\!\left[\log \frac{q}{p_{\lambda_2}} - \log \frac{q}{p_{\lambda_1}}\right]
    = \E_q\!\left[\log \frac{p_{\lambda_1}}{p_{\lambda_2}}\right],
\]
where the difference on the left is well-defined (and both KLs are finite) since their difference equals the finite quantity~\eqref{eq:logratio}.
\end{proof}

Part of the interest of identity~\eqref{eq:identity} is that it relates three different types of quantities in a single linear equation: KL divergences (the left side), log-partition values (the $A(\lambda)$ terms), and moments (the $\mu_q$ term).
Different specializations move between these worlds: eliminating the log-partition terms yields the three-point identity (Section~\ref{sec:algebraic}), while eliminating the KL terms yields convexity of $A(\lambda)$ (Section~\ref{sec:logpartition}).

\begin{example}[Categorical distributions and softmax]\label{ex:categorical}
Let $Y = \{1,\ldots,k\}$ with uniform base measure $a(y) = 1/k$, and let $\phi(y) = e_y \in \R^k$ be the $y$-th standard basis vector, so $\lambda \cdot \phi(y) = \lambda_y$.
The log-partition function is
\[
    A(\lambda)
    = \log \tfrac{1}{k}\textstyle\sum_{y=1}^k e^{\lambda_y}
    = \mathrm{LSE}(\lambda) - \log k,
\]
where $\mathrm{LSE}(\lambda) = \log\sum_y e^{\lambda_y}$ is the log-sum-exp function, and the exponential family member is the softmax distribution:
\[
    p_\lambda(y) = \frac{e^{\lambda_y}}{\sum_{j=1}^k e^{\lambda_j}} = \mathrm{softmax}(\lambda)_y.
\]
Since $\phi(y) = e_y$, for any distribution $q$ on $Y$ the moment is $\mu_q = \E_q[e_Y] = q$ (the probability vector itself), and in particular $\mu_\lambda = p_\lambda$.

\smallskip\noindent\textit{The identity.}
Using $A(\lambda_2) - A(\lambda_1) = \mathrm{LSE}(\lambda_2) - \mathrm{LSE}(\lambda_1)$, identity~\eqref{eq:identity} becomes
\[
    \KL{q}{p_{\lambda_2}} - \KL{q}{p_{\lambda_1}}
    = \mathrm{LSE}(\lambda_2) - \mathrm{LSE}(\lambda_1)
      + \sum_{y=1}^k q_y\,(\lambda_{1,y} - \lambda_{2,y}).
\]
This can be verified directly: since $\log p_\lambda(y) = \lambda_y - \mathrm{LSE}(\lambda)$,
\[
    \KL{q}{p_\lambda} = \mathrm{LSE}(\lambda) - \textstyle\sum_y q_y\,\lambda_y - H(q),
\]
where $H(q) = -\sum_y q_y \log q_y$, and the left-hand difference reduces immediately to the right-hand side.

\smallskip\noindent\textit{KL-regularized optimization.}
For a reward $r \colon Y \to \R$ and temperature $\beta > 0$, Corollary~\ref{cor:rl} gives
\[
    \max_q \bigl\{\E_q[r] - \beta\,\KL{q}{a}\bigr\}
    = \beta\,\mathrm{LSE}(r/\beta) - \beta\log k,
\]
with unique maximizer $q^*(y) = \mathrm{softmax}(r/\beta)_y \propto e^{r(y)/\beta}$.
Since $\KL{q}{a} = \log k - H(q)$ for uniform $a$, the objective is equivalently
$\E_q[r] + \beta\,H(q)$, the standard maximum-entropy objective in reinforcement
learning \cite{Ziebart2010,Haarnoja2018}.
The optimal policy is the Boltzmann distribution at temperature $\beta$, and the
maximum of this entropy-regularized objective is $\beta\,\mathrm{LSE}(r/\beta)$,
the soft-maximum of $r$.
\end{example}

\subsection*{Extension to general measurable spaces}

For readers interested in the fully general setting, the identity extends to arbitrary measurable spaces with no change to the algebra.
Let $(Y, \cF, a)$ be a probability space and $\phi \colon Y \to \R^d$ a measurable function.
The partition function, natural parameter space, and exponential family are defined as before with sums replaced by integrals:
\[
    Z_\lambda = \int_Y e^{\lambda \cdot \phi(y)}\, da(y), \quad
    p_\lambda(y) = \frac{e^{\lambda \cdot \phi(y)}}{Z_\lambda}
\]
(density with respect to $a$), and $\mu_q = \E_q[\phi]$ when $\E_q[|\phi_j|] < \infty$ for all $j$.
The identity~\eqref{eq:identity} holds under the same conditions as Proposition~\ref{prop:identity}, with the countably infinite case replaced by the requirement that at least one of $\KL{q}{p_{\lambda_1}}$, $\KL{q}{p_{\lambda_2}}$ is finite.
The key additional step in the proof is to establish absolute continuity: since all $p_\lambda$ have strictly positive densities with respect to $a$, $\KL{q}{p_{\lambda_1}} < \infty$ implies $q \ll p_{\lambda_1} \sim a \sim p_{\lambda_2}$, so $\log(q/p_{\lambda_2})$ is $q$-integrable and $\KL{q}{p_{\lambda_2}} < \infty$ follows by linearity of expectation.
The integrability of $\mu_q$ under $p_\lambda$ for $\lambda \in \mathrm{int}(\Lambda)$ is guaranteed by a standard exponential moment argument: since $Z_{\lambda \pm te_j} < \infty$ for small $t > 0$, the inequality $|\phi_j|\,e^{\lambda\cdot\phi} \leq t^{-1} e^{(\lambda \pm te_j)\cdot\phi}$ gives integrability.

\section{Algebraic consequences}\label{sec:algebraic}

\textit{All results in this section follow from the identity~\eqref{eq:identity} and the single fact that $\KL{p}{q} \geq 0$ with equality if and only if $p = q$.}
Remarkably, KL nonnegativity---together with the equality condition $\KL{p}{q} = 0 \iff p = q$---is the sole ingredient behind all minimization and projection results in this note.
No appeal to analysis, differentiation, or convexity theory is needed for any of these results.
Each result is obtained by choosing specific values of $q$, $\lambda_1$, $\lambda_2$ and rearranging.

\subsection{Generalized three-point identity}

Setting $q = p_{\lambda_1}$ in~\eqref{eq:identity} (which requires $\mu_{\lambda_1}$ to exist, e.g., $\lambda_1 \in \mathrm{int}(\Lambda)$) gives a \emph{specialized} form of the identity:
\begin{equation}\label{eq:specialized}
    \KL{p_{\lambda_1}}{p_{\lambda_2}}
    = A(\lambda_2) - A(\lambda_1) + \mu_{\lambda_1} \cdot (\lambda_1 - \lambda_2).
\end{equation}
Using this to eliminate the log-partition difference in the general identity~\eqref{eq:identity} yields a three-point identity.

\begin{corollary}[Generalized three-point identity]\label{cor:threepoint}
Under the conditions of Proposition~\ref{prop:identity}, and assuming $\mu_{\lambda_1}$ exists,
\begin{equation}\label{eq:threepoint}
    \KL{q}{p_{\lambda_2}}
    = \KL{q}{p_{\lambda_1}} + \KL{p_{\lambda_1}}{p_{\lambda_2}}
      + (\mu_q - \mu_{\lambda_1}) \cdot (\lambda_1 - \lambda_2).
\end{equation}
\end{corollary}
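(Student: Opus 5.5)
The plan is to obtain \eqref{eq:threepoint} by subtracting the specialized identity \eqref{eq:specialized} from the general identity \eqref{eq:identity}, which eliminates the log-partition difference $A(\lambda_2)-A(\lambda_1)$. Both equations have this term on the right-hand side. Subtracting \eqref{eq:specialized} from \eqref{eq:identity} gives
\[
    \KL{q}{p_{\lambda_2}} - \KL{q}{p_{\lambda_1}} - \KL{p_{\lambda_1}}{p_{\lambda_2}}
    = (\mu_q - \mu_{\lambda_1}) \cdot (\lambda_1 - \lambda_2),
\]
and moving the two KL terms to the right yields exactly \eqref{eq:threepoint}.

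The only real care is justifying that both identities hold and that every quantity is finite, so that the subtraction is legitimate and no $\infty-\infty$ appears.

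\textbf{The general identity.} Equation \eqref{eq:identity} is available under the hypotheses of Proposition~\ref{prop:identity}, which the corollary assumes. In the countably infinite case these hypotheses include that one of $\KL{q}{p_{\lambda_1}}$, $\KL{q}{p_{\lambda_2}}$ is finite, and the proposition then guarantees both are finite.

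\textbf{The specialized identity.} Equation \eqref{eq:specialized} is Proposition~\ref{prop:identity} applied with $q=p_{\lambda_1}$. This requires $\mu_{\lambda_1}$ to exist, which is assumed. It also requires, in the infinite case, that one of the relevant KLs be finite. Here $\KL{p_{\lambda_1}}{p_{\lambda_1}}=0$ is finite, so the proposition applies and gives $\KL{p_{\lambda_1}}{p_{\lambda_2}}<\infty$. In the finite case everything is unconditional.

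With all three KL terms finite real numbers, the subtraction and rearrangement are valid algebra in $\R$. I expect no genuine obstacle: the main point to watch is this finiteness bookkeeping, which is the self-application trick $\KL{p_{\lambda_1}}{p_{\lambda_1}}=0$.
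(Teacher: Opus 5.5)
Your proof is correct and follows the paper's argument exactly: subtract the specialized identity~\eqref{eq:specialized} from~\eqref{eq:identity} so that the log-partition terms cancel. Your finiteness check is a sound detail the paper leaves implicit: you apply Proposition~\ref{prop:identity} with $q=p_{\lambda_1}$ and use $\KL{p_{\lambda_1}}{p_{\lambda_1}}=0$ to get $\KL{p_{\lambda_1}}{p_{\lambda_2}}<\infty$ in the countable case.
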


\begin{proof}
Subtract~\eqref{eq:specialized} from~\eqref{eq:identity} (with the same $\lambda_1, \lambda_2$):
the log-partition terms cancel, leaving~\eqref{eq:threepoint}.
\end{proof}

The last term is an inner product between the moment mismatch $\mu_q - \mu_{\lambda_1}$ and the parameter difference $\lambda_1 - \lambda_2$.

\begin{remark}
The case where $q$ is an arbitrary distribution not necessarily in the exponential family does not appear to be readily found in the literature.
It is this generality that makes the subsequent projection results directly applicable to arbitrary distributions: the Pythagorean theorem (Corollary~\ref{cor:pythagoras}) characterizes the reverse I-projection of any $q$ onto the family, and the I-projection corollary (Corollary~\ref{cor:maxent}) characterizes the I-projection of $a$ onto any moment slice.
\end{remark}

\subsection{Pythagorean theorem and reverse I-projection}\label{sec:pythagoras}

When the inner product in~\eqref{eq:threepoint} vanishes, one obtains a Pythagorean theorem.

\begin{corollary}[Pythagorean theorem]\label{cor:pythagoras}
If\, $(\mu_q - \mu_{\lambda_1}) \cdot (\lambda_1 - \lambda_2) = 0$, then
\[
    \KL{q}{p_{\lambda_2}} = \KL{q}{p_{\lambda_1}} + \KL{p_{\lambda_1}}{p_{\lambda_2}}.
\]
In particular, if $p_{\lambda_1}$ is the moment-matching member of the family for $q$ (i.e., $\mu_{\lambda_1} = \mu_q$), then this holds for all $\lambda_2 \in \Lambda$, and consequently $p_{\lambda_1}$ is the unique closest member of the family to $q$ in KL divergence (the \emph{reverse I-projection}, or \emph{reverse information projection}, of $q$ onto the family, in the terminology of \cite{CsiszarMatus2003}).
\end{corollary}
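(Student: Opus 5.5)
The first claim follows directly from Corollary~\ref{cor:threepoint}. Under its hypotheses (the conditions of Proposition~\ref{prop:identity}, with $\mu_{\lambda_1}$ existing), identity~\eqref{eq:threepoint} expresses $\KL{q}{p_{\lambda_2}}$ as $\KL{q}{p_{\lambda_1}} + \KL{p_{\lambda_1}}{p_{\lambda_2}}$ plus the cross term $(\mu_q - \mu_{\lambda_1}) \cdot (\lambda_1 - \lambda_2)$. When that cross term vanishes by assumption, we obtain exactly the stated Pythagorean relation. No further work is needed beyond noting that the finiteness caveats of Proposition~\ref{prop:identity} are inherited. In the countable case, we assume at least one of the two KLs $\KL{q}{p_{\lambda_1}}$, $\KL{q}{p_{\lambda_2}}$ is finite, and then both are finite.

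For the moment-matching case, suppose $\mu_{\lambda_1} = \mu_q$. Then the cross term is zero for every $\lambda_2 \in \Lambda$, since its first factor vanishes regardless of $\lambda_2$. So the Pythagorean relation holds for all $\lambda_2$. The only subtlety is finiteness when $Y$ is infinite, and I would dispose of it as follows. If $\KL{q}{p_{\lambda_1}} = \infty$, then by Proposition~\ref{prop:identity} every $\KL{q}{p_{\lambda_2}}$ is infinite too, and the minimization statement is degenerate: all members tie at $+\infty$, and I would state the uniqueness claim under the standing assumption that $\KL{q}{p_{\lambda_1}} < \infty$. Otherwise all quantities are finite and the identity holds verbatim.

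Minimality and uniqueness then use only KL nonnegativity. Since $\KL{p_{\lambda_1}}{p_{\lambda_2}} \geq 0$, we get $\KL{q}{p_{\lambda_2}} \geq \KL{q}{p_{\lambda_1}}$ for all $\lambda_2$. Hence $p_{\lambda_1}$ minimizes $\lambda \mapsto \KL{q}{p_\lambda}$ over the family. For uniqueness, suppose $\KL{q}{p_{\lambda_2}} = \KL{q}{p_{\lambda_1}}$, with the common value finite. Subtracting gives $\KL{p_{\lambda_1}}{p_{\lambda_2}} = 0$, and the equality case of KL nonnegativity forces $p_{\lambda_2} = p_{\lambda_1}$ as distributions. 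I will phrase uniqueness at the level of distributions rather than parameters, since distinct $\lambda$ may give the same $p_\lambda$ when $\phi$ is affinely degenerate.

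The main point of care is therefore the infinite-$KL$ bookkeeping, not the algebra, which is a one-line specialization of~\eqref{eq:threepoint}.
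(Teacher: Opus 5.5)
Your proposal is correct and matches the paper's proof: you specialize the three-point identity~\eqref{eq:threepoint}, observe that the cross term vanishes (for every $\lambda_2$ when $\mu_{\lambda_1}=\mu_q$), and get minimality and uniqueness from $\KL{p_{\lambda_1}}{p_{\lambda_2}}\ge 0$ and its equality case. Your explicit treatment of the case $\KL{q}{p_{\lambda_1}}=\infty$, where all members tie at $+\infty$ and uniqueness genuinely fails, is a worthwhile refinement that the paper leaves implicit in the finiteness hypothesis of Proposition~\ref{prop:identity}.
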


\begin{proof}
When $\mu_{\lambda_1} = \mu_q$, the inner product vanishes for every $\lambda_2$, and~\eqref{eq:threepoint} reduces to the stated identity.
Since $\KL{p_{\lambda_1}}{p_{\lambda_2}} \geq 0$, this gives $\KL{q}{p_{\lambda_2}} \geq \KL{q}{p_{\lambda_1}}$ for all $\lambda_2$, with equality iff $p_{\lambda_2} = p_{\lambda_1}$.
\end{proof}

\begin{remark}
The existence of a moment-matching $\lambda_1$ (i.e., $\mu_{\lambda_1} = \mu_q$) is a separate question, addressed in Section~\ref{sec:surjectivity}. The above result says: \emph{if} such a $\lambda_1$ exists, \emph{then} it yields the unique reverse I-projection.
\end{remark}

\subsection{I-projection onto moment slices}\label{sec:maxent}

The identity also characterizes $p_\lambda$ as the I-projection of the base distribution $a$ onto the moment slice $\cM_\mu$.
Equivalently, $p_\lambda$ is the maximum-entropy distribution subject to the moment constraint, without Lagrange multipliers.
Given $\mu \in \R^d$, define the \emph{moment slice}
\[
    \cM_\mu = \{q : \E_q[\phi] = \mu\},
\]
i.e., the set of all distributions with mean parameter~$\mu$.

\begin{corollary}[I-projection onto $\cM_\mu$]\label{cor:maxent}
If $\lambda \in \Lambda$ satisfies $\mu_\lambda = \mu$, then $p_\lambda$ is the unique minimizer of $\KL{q}{a}$ over $q \in \cM_\mu$ (equivalently, the unique maximizer of entropy relative to the distribution $a$ subject to the moment constraint).
\end{corollary}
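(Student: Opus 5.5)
The plan is to apply the identity~\eqref{eq:identity} with $\lambda_1 = \lambda$ and $\lambda_2 = 0$, using $p_0 = a$ and $A(0) = 0$. For any $q \in \cM_\mu$ we have $\mu_q = \mu = \mu_\lambda$. The identity then reads
\[
    \KL{q}{a} - \KL{q}{p_\lambda} = -A(\lambda) + \mu\cdot\lambda .
\]
The right-hand side does not depend on $q$, only on the constraint value $\mu$.

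Next I specialize to $q = p_\lambda$, which lies in $\cM_\mu$ by hypothesis. Since $\KL{p_\lambda}{p_\lambda} = 0$, this gives $\KL{p_\lambda}{a} = \mu\cdot\lambda - A(\lambda)$; this is also~\eqref{eq:specialized} with $\lambda_2 = 0$. Subtracting, every $q \in \cM_\mu$ satisfies the Pythagorean-type relation
\[
    \KL{q}{a} = \KL{q}{p_\lambda} + \KL{p_\lambda}{a}.
\]
KL nonnegativity then yields $\KL{q}{a} \ge \KL{p_\lambda}{a}$, with equality iff $\KL{q}{p_\lambda} = 0$, i.e.\ iff $q = p_\lambda$. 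This gives both minimality and uniqueness. For the entropy reformulation, note that entropy relative to $a$ is just $-\KL{q}{a}$, so the maximizer statement is the same claim with the sign reversed.

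The main obstacle is the finiteness bookkeeping in the countably infinite case. Proposition~\ref{prop:identity} needs at least one of the two KLs to be finite. For $q$ with $\KL{q}{a} = \infty$ the inequality holds trivially, so I will restrict to $q$ with $\KL{q}{a} < \infty$. The identity then applies, and it also guarantees that $\KL{q}{p_\lambda}$ is finite.

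For the specialization $q = p_\lambda$, I need $\mu_\lambda$ to exist, but that is exactly the hypothesis $\mu_\lambda = \mu$. I also need one of the KLs to be finite, and $\KL{p_\lambda}{p_\lambda} = 0$ supplies this. So every step stays within the hypotheses of Proposition~\ref{prop:identity}.
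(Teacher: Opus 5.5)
Your proof is correct and follows the paper's route: subtracting the $q = p_\lambda$ specialization from the identity with $\lambda_1 = \lambda$, $\lambda_2 = 0$ is exactly how the paper obtains the three-point identity~\eqref{eq:threepoint}. The paper then applies it with the moment term vanishing on $\cM_\mu$ to get $\KL{q}{a} = \KL{q}{p_\lambda} + \KL{p_\lambda}{a}$ and concludes by KL nonnegativity, just as you do; your finiteness bookkeeping (discarding $q$ with $\KL{q}{a} = \infty$, which cannot be a minimizer since $\KL{p_\lambda}{a} = \lambda\cdot\mu - A(\lambda)$ is finite) is a small addition that the paper leaves implicit.
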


\begin{proof}
Setting $\lambda_2 = 0$ and $\lambda_1 = \lambda$ in~\eqref{eq:threepoint}, and using $p_0 = a$:
\[
    \KL{q}{a}
    = \KL{q}{p_\lambda} + \KL{p_\lambda}{a}
      + (\mu_q - \mu_\lambda) \cdot (\lambda - 0).
\]
For $q \in \cM_\mu$ with $\mu_\lambda = \mu$, we have $\mu_q = \mu_\lambda$, so the last term vanishes:
\[
    \KL{q}{a} = \KL{q}{p_\lambda} + \KL{p_\lambda}{a}.
\]
Since $\KL{q}{p_\lambda} \geq 0$ with equality iff $q = p_\lambda$, the minimum of $\KL{q}{a}$ over $\cM_\mu$ is uniquely attained at $q = p_\lambda$.
\end{proof}

\begin{remark}
This says that $p_\lambda$ is the I-projection of the base distribution $a$ onto the moment slice $\cM_\mu$.
As in Corollary~\ref{cor:pythagoras}, the existence of a $\lambda$ with $\mu_\lambda = \mu$ is addressed separately in Section~\ref{sec:surjectivity}.
\end{remark}

\begin{figure}[h]
\centering
\begin{tikzpicture}[scale=1.05, >=Stealth]
\fill[gray!20]
    (0.8, 3.2)
    -- (2.4, 1.5)
    -- (8.0, 1.5)
    -- (6.4, 3.2)
    -- cycle;
  \node at (2.8, 1.9) {$\mathcal{M}_\mu$};
  \draw[thick,
    postaction={decorate,
      decoration={markings,
        mark=at position 0.38 with {\arrowreversed[scale=1.4]{Stealth}}
      }
    }
  ]
    (7.5, 0.4)
    .. controls (7.1, 1.2) and (6.8, 1.8) ..
    (6.5, 2.4)
    .. controls (6.1, 3.1) and (5.5, 3.9) ..
    (5.1, 4.8)
    .. controls (4.8, 5.5) and (5.2, 6.2) ..
    (6.1, 7.0);
  \node at (6.6, 6.6) {$\mathcal{E}$};
  \node[right] at (6.0, 3.4) {\small I-proj};
  \filldraw (6.5, 2.4) circle (2.2pt);
  \node[right] at (6.6, 2.35) {$p_\lambda$};
  \filldraw (5.1, 4.8) circle (2.2pt);
  \node[left] at (5.0, 4.8) {$p_0 = a$};
  \filldraw (3.9, 2.25) circle (2.2pt);
  \node[below] at (3.9, 2.15) {$q$};
  \draw[-{Stealth[scale=1.4]}, dashed, thick] (3.9, 2.25) -- (6.38, 2.38);
  \node[above, rotate=3] at (5.0, 2.3) {\small rev.\ I-proj};
  \draw (6.367, 2.647) -- (6.087, 2.647) -- (6.22, 2.4);
\end{tikzpicture}
\caption{The reverse I-projection of $q \in \mathcal{M}_\mu$ onto the exponential
family $\mathcal{E}$ and the I-projection of $a = p_0$ onto the moment slice
$\mathcal{M}_\mu$ both yield $p_\lambda$. The right angle at $p_\lambda$ reflects
Corollary~\ref{cor:pythagoras}.}
\label{fig:projections}
\end{figure}
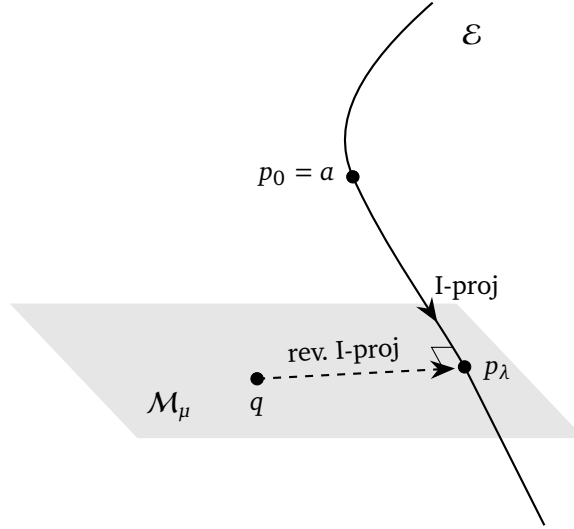

\subsection{Gibbs variational principle (ELBO)}\label{sec:elbo}

\begin{corollary}\label{cor:gibbs}
For all $\lambda \in \Lambda$ and all distributions $q$ with $\E_q[|\phi_j|] < \infty$ for all $j$ and $\KL{q}{a} < \infty$,
\begin{equation}\label{eq:elbo}
    A(\lambda) = \underbrace{\lambda \cdot \mu_q - \KL{q}{a}}_{\mathrm{ELBO}(q,\lambda)} + \KL{q}{p_\lambda}.
\end{equation}
In particular,
\[
    A(\lambda) = \sup_q \left\{ \lambda \cdot \mu_q - \KL{q}{a} \right\},
\]
and the supremum is attained uniquely at $q = p_\lambda$.
\end{corollary}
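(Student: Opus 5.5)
The plan is to apply Proposition~\ref{prop:identity} with $\lambda_2 = \lambda$ and $\lambda_1 = 0$, using $p_0 = a$ and $A(0) = 0$. This gives
\[
    \KL{q}{p_\lambda} - \KL{q}{a} = A(\lambda) - 0 + \mu_q \cdot (0 - \lambda),
\]
which rearranges directly to~\eqref{eq:elbo}:
\[
    A(\lambda) = \lambda\cdot\mu_q - \KL{q}{a} + \KL{q}{p_\lambda}.
\]
Note that $0 \in \Lambda$ always holds, since $Z_0 = 1$.

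The point needing care is the validity of the identity in the countable (or general measurable) case. There Proposition~\ref{prop:identity} requires at least one of the two KL divergences to be finite. This is exactly why the hypothesis $\KL{q}{a} < \infty$ is imposed: with $\lambda_1 = 0$, the proposition applies, and it also yields $\KL{q}{p_\lambda} < \infty$. Moment existence is ensured by $\E_q[|\phi_j|] < \infty$. Hence every term in~\eqref{eq:elbo} is finite and the rearrangement is legitimate.

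For the variational statement, I would use $\KL{q}{p_\lambda} \geq 0$ in~\eqref{eq:elbo}. This gives $\mathrm{ELBO}(q,\lambda) \leq A(\lambda)$ for every admissible $q$, with equality iff $q = p_\lambda$. To conclude that the supremum equals $A(\lambda)$ and is attained, it remains to check that $p_\lambda$ itself is admissible, i.e.\ that $\mu_\lambda$ exists and $\KL{p_\lambda}{a} < \infty$. The latter follows from $\log(p_\lambda/a) = \lambda\cdot\phi - A(\lambda)$, which is $p_\lambda$-integrable once $\mu_\lambda$ exists. Indeed, $\KL{p_\lambda}{a} = \lambda\cdot\mu_\lambda - A(\lambda)$, consistent with~\eqref{eq:specialized} at $\lambda_2 = 0$.

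I expect the main (minor) obstacle to be this admissibility of $p_\lambda$ when $\lambda$ lies on the boundary of $\Lambda$, where $\mu_\lambda$ may fail to exist. In the finite case there is no issue. Otherwise I would note that existence is guaranteed for $\lambda \in \mathrm{int}(\Lambda)$ by the exponential moment argument given earlier, and make the attainment claim under that existence assumption. The inequality $\sup_q \mathrm{ELBO}(q,\lambda) \leq A(\lambda)$ holds regardless.
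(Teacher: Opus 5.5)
Your proposal matches the paper's proof: set $\lambda_1 = 0$, $\lambda_2 = \lambda$ in the identity, rearrange to get the ELBO decomposition, and conclude from $\KL{q}{p_\lambda} \geq 0$ with its equality case. Your extra check that $p_\lambda$ is itself admissible is a genuine refinement that the paper's proof passes over silently. In the countably infinite or general case, when $\mu_\lambda$ fails to exist at a boundary point of $\Lambda$, the supremum is still $A(\lambda)$ (e.g.\ via normalized truncations of $p_\lambda$) but is not attained, so restricting the attainment claim as you do is correct.
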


\begin{proof}
Set $\lambda_1 = 0$ in~\eqref{eq:identity}, so $p_{\lambda_1} = a$ and $A(0) = 0$.
For $\lambda_2 = \lambda$:
\[
    \KL{q}{p_\lambda} = \KL{q}{a} + A(\lambda) - \lambda \cdot \mu_q.
\]
Rearranging gives~\eqref{eq:elbo}, and the variational characterization follows from $\KL{q}{p_\lambda} \geq 0$ with equality iff $q = p_\lambda$.
\end{proof}

\begin{remark}
The decomposition~\eqref{eq:elbo} is widely known in machine learning as the \emph{ELBO (Evidence Lower Bound)} decomposition \cite{Jordan1999,BleiKucukelbirMcAuliffe2017}: the log-evidence $A(\lambda)$ equals the ELBO plus the KL gap.
In statistical physics, the same result appears as the \emph{variational free energy} principle, with $-A(\lambda)$ as free energy, $\KL{q}{a}$ as negative entropy, and $\lambda \cdot \mu_q$ as expected energy.
In probability and large deviations, it is the \emph{Gibbs variational principle}; 
see \cite[Chapter~4]{PolyanskiyWu2025} for a modern treatment and historical references.
The variational characterization $A(\lambda) = \sup_q\{\lambda\cdot\mu_q - \KL{q}{a}\}$ directly yields the solution to KL-regularized reward maximization: setting $d=1$, $\phi(y)=r(y)$, $\lambda=1/\beta$ and multiplying through by $\beta$ gives $\sup_q\{\E_q[r]-\beta\,\KL{q}{a}\} = \beta A(1/\beta)$, which is the content of the next corollary. We present it separately because the explicit Boltzmann form of the optimizer $q^*\propto a\,e^{r/\beta}$, the role of the temperature parameter $\beta$, and the connections to the reinforcement learning and RLHF literature warrant a dedicated statement.
\end{remark}

\subsection{KL-regularized reward maximization}\label{sec:klcontrol}

Corollary~\ref{cor:gibbs} directly yields the solution to KL-regularized reward maximization, a key result in reinforcement learning. Setting $d=1$, $\phi(y)=r(y)$, $\lambda=1/\beta$ in the variational characterization and multiplying by $\beta$ gives the following.

\begin{corollary}\label{cor:rl}
Let $d = 1$, $\phi(y) = r(y)$ a reward function, and $\beta > 0$ a regularization parameter such that $1/\beta \in \Lambda$ (i.e., $\E_a[e^{r/\beta}] < \infty$; this is automatic when $Y$ is finite, or more generally when $r$ is bounded above).
Then
\[
    \max_q \left\{ \E_q[r] - \beta\, \KL{q}{a} \right\}
    = \beta A(1/\beta),
\]
where the maximum is over distributions $q$ with $\E_q[|r|] < \infty$ and $\KL{q}{a} < \infty$,
and the unique maximizer is the exponential family member $q^* = p_{1/\beta}$, i.e., $q^*(y) \propto a(y)\, e^{r(y)/\beta}$.
\end{corollary}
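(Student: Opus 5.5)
The plan is to specialize Corollary~\ref{cor:gibbs} directly. Take $d=1$, $\phi = r$, and $\lambda = 1/\beta$, which is admissible because the hypothesis is $1/\beta \in \Lambda$. With these choices the family member is
\[
    p_{1/\beta}(y) = \frac{a(y)\,e^{r(y)/\beta}}{Z_{1/\beta}},
\]
and for any $q$ with $\E_q[|r|]<\infty$ the moment is $\mu_q = \E_q[r]$. These integrability and finiteness conditions on $q$ are exactly those required in Corollary~\ref{cor:gibbs}. Applying the decomposition~\eqref{eq:elbo} then gives, for every admissible $q$,
\[
    A(1/\beta) = \tfrac{1}{\beta}\E_q[r] - \KL{q}{a} + \KL{q}{p_{1/\beta}}.
\]
Multiplying through by $\beta>0$ and rearranging yields
\[
    \E_q[r] - \beta\,\KL{q}{a} = \beta A(1/\beta) - \beta\,\KL{q}{p_{1/\beta}}.
\]

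From this rearranged identity, KL nonnegativity gives the upper bound $\E_q[r] - \beta\,\KL{q}{a} \le \beta A(1/\beta)$ for all admissible $q$. Equality holds if and only if $\KL{q}{p_{1/\beta}}=0$, that is, if and only if $q=p_{1/\beta}$. This establishes uniqueness of the maximizer, and also identifies it as the Boltzmann form $q^*\propto a\,e^{r/\beta}$.

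The one point needing care is that the supremum is actually a \emph{maximum}: $p_{1/\beta}$ must itself lie in the admissible class, i.e.\ satisfy $\E_{p_{1/\beta}}[|r|]<\infty$ and $\KL{p_{1/\beta}}{a}<\infty$. When $Y$ is finite this is trivial. In general I would argue as follows.
\begin{itemize}
\item \textbf{Integrability of $r$.} If $1/\beta\in\mathrm{int}(\Lambda)$, the exponential moment argument already given in the general-measurable-space subsection yields $\E_{p_{1/\beta}}[|r|]<\infty$.
\item \textbf{Finiteness of the KL term.} We have $\log(p_{1/\beta}/a) = r/\beta - A(1/\beta)$. Hence $\KL{p_{1/\beta}}{a} = \E_{p_{1/\beta}}[r]/\beta - A(1/\beta)$, which is finite once the moment exists.
\end{itemize}
This is the main obstacle: at a boundary point of $\Lambda$ the moment may fail to exist. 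I would handle it by adding the mild hypothesis $\E_{p_{1/\beta}}[|r|]<\infty$, as implicitly assumed in Corollary~\ref{cor:gibbs}'s ``attained uniquely'' clause. I would also note that this hypothesis holds automatically when $r$ is bounded above, since then $\Lambda \supseteq [0,\infty)$ and $1/\beta$ is interior.
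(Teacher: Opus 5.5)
Your proof is the paper's: specialize Corollary~\ref{cor:gibbs} with $d=1$, $\phi=r$, $\lambda=1/\beta$, multiply by $\beta$, and read off both the bound and the unique maximizer from $\KL{q}{p_{1/\beta}}\ge 0$ (with equality iff $q=p_{1/\beta}$). Your extra check that $p_{1/\beta}$ is itself admissible is a genuine refinement that the paper's one-line proof omits, because at a boundary point of $\Lambda$ the ``max'' can fail: for example, $Y=\{1,2,\dots\}$, $r(y)=y$, $a(y)\propto e^{-y}/y^2$, $\beta=1$ gives $\Lambda=(-\infty,1]$ and $p_1(y)\propto 1/y^2$ with $\E_{p_1}[r]=\KL{p_1}{a}=\infty$, so $A(1)$ is only a supremum, approached by truncations of $p_1$ but never attained. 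Your fix, requiring $1/\beta\in\mathrm{int}(\Lambda)$ (which covers finite $Y$ and $r$ bounded above), is correct.
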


\begin{proof}
Set $d=1$, $\phi=r$, $\lambda=1/\beta$ in Corollary~\ref{cor:gibbs} and multiply through by $\beta$.
\end{proof}

\begin{remark}
The exponential form of the optimal distribution in KL-regularized optimization has deep roots; 
the distribution $p_{1/\beta}$ is variously called the Boltzmann or Gibbs distribution in 
statistical physics, and the softmax or Boltzmann policy in reinforcement learning.
In control theory, \cite{Todorov2007} introduced linearly solvable MDPs with KL control costs, 
where the optimal policy takes this exponential form; related results were obtained independently 
by \cite{Kappen2005} via path integral methods.
In maximum entropy reinforcement learning, \cite{Ziebart2010} derived the same form for the 
entropy-regularized setting (the special case $a = \mathrm{uniform}$); the soft actor-critic 
framework of \cite{Haarnoja2018} extends these ideas to the deep RL setting.
The LLM fine-tuning literature largely rediscovered these results independently, with the 
exponential form appearing as both an optimal policy and a modeling target.
\cite{Khalifa2021} give an early application that explicitly uses exponential families and moment 
constraints, framing fine-tuning as forward-KL minimization toward a target distribution; 
\cite{Ouyang2022} established the canonical RLHF setting; \cite[Theorem~1]{Korbak2022} provide 
a clean statement of the equivalence between KL-regularized reward maximization and reverse-KL 
minimization toward $p_{1/\beta}$, and \cite{korbak_rl_2022} offer a concurrent Bayesian 
interpretation of the same equivalence; \cite{Rafailov2023} exploit the exponential form to 
derive a closed-form policy optimization objective, connecting RLHF to direct preference 
optimization.
Together, these threads illustrate how the same variational principle has been independently 
rediscovered across communities, each time yielding new algorithmic and conceptual insights.
\end{remark}

\subsection{Convexity of $A$ and the supporting hyperplane property}\label{sec:convexity}

\begin{corollary}\label{cor:subgradient}
Assume that $\mu_\lambda$ exists for every $\lambda\in\Lambda$. Then, for all
$\lambda_1,\lambda_2\in\Lambda$,
\begin{equation}\label{eq:subgradient}
    A(\lambda_2) \geq A(\lambda_1) + \mu_{\lambda_1} \cdot (\lambda_2 - \lambda_1).
\end{equation}
In other words, $A$ is convex on $\Lambda$, and the hyperplane through $({\lambda_1}, A(\lambda_1))$ with slope $\mu_{\lambda_1}$ is a global supporting hyperplane of $A$.
\end{corollary}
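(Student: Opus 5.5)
The plan is to use the specialized identity~\eqref{eq:specialized}, which is identity~\eqref{eq:identity} with $q = p_{\lambda_1}$, together with KL nonnegativity. By assumption $\mu_{\lambda_1}$ exists, so~\eqref{eq:specialized} applies for any $\lambda_1,\lambda_2\in\Lambda$. It reads
\[
    \KL{p_{\lambda_1}}{p_{\lambda_2}}
    = A(\lambda_2) - A(\lambda_1) - \mu_{\lambda_1}\cdot(\lambda_2-\lambda_1).
\]
Since the left side is nonnegative, rearranging gives~\eqref{eq:subgradient} at once. Equality holds iff $p_{\lambda_1}=p_{\lambda_2}$.

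The one point to check is that~\eqref{eq:specialized} is legitimate. Proposition~\ref{prop:identity} requires one of the two KLs to be finite when $Y$ is infinite. Here $\KL{p_{\lambda_1}}{p_{\lambda_1}}=0$ is finite, so the proposition yields that $\KL{p_{\lambda_1}}{p_{\lambda_2}}$ is finite and the identity holds. No further integrability is needed beyond the existence of $\mu_{\lambda_1}$, which we have assumed.

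For convexity, fix $\lambda,\lambda'\in\Lambda$ and $t\in[0,1]$, and set $\lambda_t=t\lambda+(1-t)\lambda'$. The first step is to note that $\lambda_t\in\Lambda$. This follows from H\"older's inequality, $Z_{\lambda_t}\le Z_\lambda^{t}Z_{\lambda'}^{1-t}$, or more elementarily from the pointwise bound $e^{\lambda_t\cdot\phi}\le t e^{\lambda\cdot\phi}+(1-t)e^{\lambda'\cdot\phi}$ given by convexity of $\exp$. This membership step is the only one that goes beyond the identity, and I expect it to be the main, if mild, obstacle.

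With $\lambda_t\in\Lambda$ in hand, apply~\eqref{eq:subgradient} with $\lambda_1=\lambda_t$, once with $\lambda_2=\lambda$ and once with $\lambda_2=\lambda'$. Take the combination $t$ times the first inequality plus $(1-t)$ times the second. The linear terms cancel because $t(\lambda-\lambda_t)+(1-t)(\lambda'-\lambda_t)=0$, which leaves $tA(\lambda)+(1-t)A(\lambda')\ge A(\lambda_t)$. This is convexity of $A$.

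The supporting-hyperplane statement is just~\eqref{eq:subgradient} read geometrically. The affine function $\lambda_2\mapsto A(\lambda_1)+\mu_{\lambda_1}\cdot(\lambda_2-\lambda_1)$ touches the graph of $A$ at $\lambda_1$ and lies below it everywhere on $\Lambda$.
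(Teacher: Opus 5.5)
Your proof is correct and takes the same approach as the paper. The paper's proof is the single line that \eqref{eq:specialized} gives $A(\lambda_2)-A(\lambda_1)-\mu_{\lambda_1}\cdot(\lambda_2-\lambda_1)=\KL{p_{\lambda_1}}{p_{\lambda_2}}\ge 0$. Your extra steps correctly fill in details the paper leaves implicit in its ``in other words'': finiteness via $\KL{p_{\lambda_1}}{p_{\lambda_1}}=0$, convexity of $\Lambda$ via the pointwise bound on $e^{\lambda_t\cdot\phi}$, and convexity of $A$ from two supporting-hyperplane inequalities at $\lambda_t$.
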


\begin{proof}
The specialized identity~\eqref{eq:specialized} gives
$A(\lambda_2) - A(\lambda_1) - \mu_{\lambda_1}\cdot(\lambda_2 - \lambda_1) = \KL{p_{\lambda_1}}{p_{\lambda_2}} \geq 0$.
\end{proof}

\begin{remark}\label{rem:strict}
If the parametrization is injective ($\lambda_1 \neq \lambda_2$ implies $p_{\lambda_1} \neq p_{\lambda_2}$), then the inequality in~\eqref{eq:subgradient} is strict for $\lambda_1 \neq \lambda_2$, since the gap equals $\KL{p_{\lambda_1}}{p_{\lambda_2}}$, which vanishes only when $p_{\lambda_1} = p_{\lambda_2}$.
Thus $A$ is strictly convex.
\end{remark}

\subsection{The dual function $A^*$ and Legendre duality}\label{sec:dual}

Define the \emph{Legendre dual} of $A$ as
\[
    A^*(\mu) = \sup_{\lambda \in \Lambda} \left\{ \lambda \cdot \mu - A(\lambda) \right\}.
\]
By definition, $A^*$ is convex in $\mu$ (as a supremum of affine functions), and the Legendre--Fenchel inequality $A(\lambda) + A^*(\mu) \geq \lambda \cdot \mu$ holds for all $\lambda, \mu$.

\begin{corollary}[Identification of $A^*$]\label{cor:dual}
If there exists $\lambda \in \Lambda$ with $\mu_\lambda = \mu$, then the supremum in the definition of $A^*(\mu)$ is attained at this $\lambda$, and
\begin{equation}\label{eq:dual}
    A^*(\mu) = \KL{p_\lambda}{a}.
\end{equation}
\end{corollary}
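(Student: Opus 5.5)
The plan is to read off the result directly from the specialized identity~\eqref{eq:specialized}, with $\lambda_1$ taken to be the moment-matching parameter $\lambda$ and $\lambda_2 = \lambda'$ an arbitrary element of $\Lambda$. Writing~\eqref{eq:specialized} with $\mu_\lambda = \mu$ and rearranging gives, for every $\lambda' \in \Lambda$,
\[
    \lambda' \cdot \mu - A(\lambda')
    = \lambda \cdot \mu - A(\lambda) - \KL{p_\lambda}{p_{\lambda'}}.
\]
Since $\KL{p_\lambda}{p_{\lambda'}} \geq 0$, the objective $\lambda'\cdot\mu - A(\lambda')$ is bounded above by its value at $\lambda' = \lambda$. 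Moreover, the bound is attained there, because $\KL{p_\lambda}{p_\lambda} = 0$. Hence the supremum defining $A^*(\mu)$ is a maximum, achieved at $\lambda$, and $A^*(\mu) = \lambda\cdot\mu - A(\lambda)$. By the equality condition, the maximizers are exactly those $\lambda'$ with $p_{\lambda'} = p_\lambda$.

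Next, to identify this value as a KL divergence, I will apply~\eqref{eq:specialized} again, now with $\lambda_1 = \lambda$ and $\lambda_2 = 0$. Using $A(0) = 0$ and $p_0 = a$, this gives
\[
    \KL{p_\lambda}{a} = -A(\lambda) + \mu_\lambda \cdot \lambda = \lambda\cdot\mu - A(\lambda),
\]
which matches the expression for $A^*(\mu)$ and proves~\eqref{eq:dual}. As an alternative route, I could invoke Corollary~\ref{cor:gibbs} with $q = p_\lambda$: the gap term vanishes, so $A(\lambda) = \lambda\cdot\mu_\lambda - \KL{p_\lambda}{a}$, which is the same statement.

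The only step needing care is the side conditions under which~\eqref{eq:specialized} applies. It requires $\mu_{\lambda_1} = \mu_\lambda$ to exist, and that is given by hypothesis since $\mu_\lambda = \mu$. In the countable case it also requires the finiteness conditions of Proposition~\ref{prop:identity}. Here $\KL{p_\lambda}{p_\lambda} = 0$ is finite, so the proposition applies with $q = p_\lambda$ for every $\lambda_2 \in \Lambda$. That makes both $\KL{p_\lambda}{p_{\lambda'}}$ and $\KL{p_\lambda}{a}$ finite, and the identities hold without further assumptions. I expect no real obstacle beyond checking these conditions.
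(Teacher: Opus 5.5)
Your proof is correct and is essentially the paper's argument: the paper also identifies the value via~\eqref{eq:specialized} with $\lambda_2 = 0$, and bounds the objective by citing the supporting hyperplane property (Corollary~\ref{cor:subgradient}), which is exactly your first display. Writing that step out directly, as you do, has a small advantage: it needs $\mu_\lambda$ only at the given $\lambda$, whereas Corollary~\ref{cor:subgradient} is formally stated under the assumption that $\mu_{\lambda'}$ exists for every $\lambda' \in \Lambda$.
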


\begin{proof}
From the specialized identity~\eqref{eq:specialized} with $\lambda_2 = 0$ (so $p_{\lambda_2} = a$ and $A(0) = 0$):
\[
    \KL{p_\lambda}{a} = \lambda \cdot \mu_\lambda - A(\lambda) = \lambda \cdot \mu - A(\lambda).
\]
So $\lambda \cdot \mu - A(\lambda) = \KL{p_\lambda}{a}$.
It remains to show this is the supremum over all $\lambda' \in \Lambda$.
For any $\lambda' \in \Lambda$, the supporting hyperplane property~\eqref{eq:subgradient} with $\mu_{\lambda_1} = \mu_\lambda = \mu$ gives:
\[
    A(\lambda') \geq A(\lambda) + \mu \cdot (\lambda' - \lambda),
\]
which rearranges to $\lambda' \cdot \mu - A(\lambda') \leq \lambda \cdot \mu - A(\lambda) = \KL{p_\lambda}{a}$.
Hence $A^*(\mu) = \KL{p_\lambda}{a} \geq 0$, with the nonnegativity following from KL $\geq 0$.
\end{proof}

\begin{remark}
The identification $A^*(\mu) = \KL{p_\lambda}{a}$ gives a probabilistic
interpretation of the Legendre dual: $A^*(\mu)$ is the KL divergence from
the moment-matching exponential family member $p_\lambda$ to the base
distribution $a$.
Together with $A(\lambda) = \lambda \cdot \mu - A^*(\mu)$ (from the proof above), this recovers the Legendre--Fenchel relation $A(\lambda) + A^*(\mu) = \lambda \cdot \mu$ as an identity (not merely an inequality) at the matching pair $(\lambda, \mu)$.
\end{remark}

\section{Analytic complements}\label{sec:logpartition}

The results of Section~\ref{sec:algebraic} required nothing beyond $\KL{\cdot}{\cdot} \geq 0$.
We now collect the results that require one additional analytic input: the differentiability of $A$ on $\mathrm{int}(\Lambda)$, and a geometric argument for the surjectivity of the moment map.

\subsection{Differentiability and the gradient of $A$}\label{sec:gradient}

\begin{remark}\label{rem:gradient}
The log-partition function $A(\lambda)$ is in fact differentiable on $\mathrm{int}(\Lambda)$, with $\nabla A(\lambda) = \mu_\lambda$.
When $Y$ is finite, this is immediate since $A(\lambda) = \log \sum_y a(y)\, e^{\lambda\cdot\phi(y)}$ is a finite sum of smooth functions.
In general, it follows from differentiating under the integral sign, justified by dominated convergence using the fact that $Z_\lambda$ converges in a neighborhood of any $\lambda \in \mathrm{int}(\Lambda)$; see, e.g., \cite[Chapter~2]{Brown1986} or \cite[Proposition~3.1]{WainwrightJordan2008}.
This is the only analytic input in this note that does not follow from the identity.
\end{remark}

\subsection{Bregman divergence interpretation}

Given differentiability, the specialized identity~\eqref{eq:specialized} acquires a classical interpretation.
Recall that for a differentiable convex function $F$, the \emph{Bregman divergence} is defined as $B_F(x,y) = F(x) - F(y) - \nabla F(y) \cdot (x - y)$ \cite{Bregman1967}.
Using $\nabla A(\lambda_1) = \mu_{\lambda_1}$ (Remark~\ref{rem:gradient}), the specialized identity~\eqref{eq:specialized} becomes
\begin{equation}\label{eq:bregman}
    \KL{p_{\lambda_1}}{p_{\lambda_2}}
    = A(\lambda_2) - A(\lambda_1) - \nabla A(\lambda_1) \cdot (\lambda_2 - \lambda_1),
\end{equation}
which is precisely the Bregman divergence of the log-partition function evaluated at $(\lambda_2, \lambda_1)$.
This is the well-known representation of within-family KL divergence as a Bregman divergence \cite{Banerjee2005}.

\begin{remark}\label{rem:bregman_threepoint}
With the Bregman interpretation in hand, the generalized three-point identity (Corollary~\ref{cor:threepoint}) can be compared with the standard three-point property of Bregman divergences \cite{Banerjee2005}; see also \cite{Nielsen2010}.
When $q = p_{\lambda_0}$ for some $\lambda_0 \in \Lambda$, identity~\eqref{eq:threepoint} reduces to the standard Bregman three-point property, with each KL divergence replaced by the corresponding Bregman divergence of $A(\lambda)$ via~\eqref{eq:bregman}:
\[
    \KL{p_{\lambda_0}}{p_{\lambda_2}}
    = \KL{p_{\lambda_0}}{p_{\lambda_1}} + \KL{p_{\lambda_1}}{p_{\lambda_2}}
      + (\mu_{\lambda_0} - \mu_{\lambda_1}) \cdot (\lambda_1 - \lambda_2).
\]
However, the general form~\eqref{eq:threepoint}, where $q$ is an \emph{arbitrary} distribution outside the exponential family, does not seem to be readily found in the literature.
This generality is what makes the Pythagorean theorem (Corollary~\ref{cor:pythagoras}) and the maximum entropy characterization (Corollary~\ref{cor:maxent}) directly applicable to reverse I-projections of arbitrary distributions onto the exponential family.
The Pythagorean theorem for exponential families is classically developed in the framework of information geometry \cite{Amari1985,AmariNagaoka2000}, using the dually flat structure of the manifold.
A related Pythagorean property for I-projections in a more general setting (without explicit exponential family structure) was established by \cite{Csiszar1975}.
\end{remark}

\subsection{Surjectivity of the moment map}\label{sec:surjectivity}

Several results in Section~\ref{sec:algebraic} are conditional on the existence of a
parameter $\lambda$ with $\mu_\lambda=\mu$. In the discrete setting this can be shown
directly.

Assume throughout this subsection that $Y$ is finite or countably infinite, so that
\[
M=\operatorname{conv}(\phi(Y)).
\]

\begin{proposition}\label{prop:surjectivity}
Assume $\Lambda=\R^d$ (which holds automatically when $Y$ is finite, and is an additional
assumption when $Y$ is countably infinite, e.g.\ when $\phi$ is bounded).
For any $\mu\in\mathrm{int}(M)$, there exists $\lambda^*\in\R^d$ such that
$\mu_{\lambda^*}=\mu$.
\end{proposition}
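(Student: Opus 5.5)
The plan is to obtain $\lambda^*$ as a minimizer of the convex function
\[
    G(\lambda) = A(\lambda) - \lambda\cdot\mu ,
\]
whose value is, by Corollary~\ref{cor:dual}, exactly what the Legendre dual optimizes. If a minimizer $\lambda^*$ exists, then since $\Lambda=\R^d$ it is interior. By Remark~\ref{rem:gradient}, $A$ is differentiable there with $\nabla A(\lambda^*)=\mu_{\lambda^*}$, so the first-order condition $\nabla G(\lambda^*)=0$ gives $\mu_{\lambda^*}=\mu$. The whole task therefore reduces to showing that $G$ attains its minimum on $\R^d$.

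For existence I will prove coercivity: $G(\lambda)\to+\infty$ as $|\lambda|\to\infty$. Combined with continuity of $G$ (it is convex and finite on $\R^d$ by Corollary~\ref{cor:subgradient}, hence continuous; alternatively differentiable by Remark~\ref{rem:gradient}), the sublevel set $\{G\le G(0)\}$ is then closed and bounded, hence compact, so a minimizer exists. To get coercivity, write $\lambda = t u$ with $t\ge0$ and $|u|=1$. For any single $y\in Y$ we have $Z_\lambda \ge a(y)e^{t\,u\cdot\phi(y)}$, so
\[
    G(tu) \ \ge\ \log a(y) + t\,u\cdot(\phi(y)-\mu).
\]

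The key geometric step, and the main obstacle, is uniformity over directions. Because $\mu\in\mathrm{int}(M)$ with $M=\operatorname{conv}(\phi(Y))$, for each unit $u$ there is some $y$ with $u\cdot(\phi(y)-\mu)>0$. Otherwise $\phi(Y)$ would lie in the half-space $\{x: u\cdot(x-\mu)\le0\}$, and hence so would $M$, contradicting that $\mu$ is interior. I need a uniform margin, and I will get it by using interiority quantitatively. Choose $\varepsilon>0$ with the ball $B(\mu,\varepsilon)\subset M$. Then $\mu+\varepsilon u\in M$ is a finite convex combination of points $\phi(y_i)$, so some $y_i$ satisfies $u\cdot(\phi(y_i)-\mu)\ge\varepsilon$.

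This margin is uniform in $u$, but the accompanying constant $\log a(y_i)$ is not, since $y_i$ depends on $u$ and $a(y_i)$ could be tiny when $Y$ is infinite. I will remove this dependence by compactness of the unit sphere. The set $U_y=\{u: u\cdot(\phi(y)-\mu)>\varepsilon/2\}$ is open, and these sets cover the sphere by the previous step. Extract a finite subcover indexed by $y_1,\dots,y_m$ and set $c=\min_k \log a(y_k)>-\infty$. Then
\[
    G(tu)\ \ge\ c + t\varepsilon/2 \qquad \text{for all unit } u \text{ and all } t\ge 0,
\]
which gives coercivity and completes the argument. The finite case follows at once, since $\Lambda=\R^d$ automatically; in the countable case the assumption $\Lambda=\R^d$ is used only to guarantee that $G$ is finite everywhere and differentiable.
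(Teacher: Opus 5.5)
Your proof is correct and follows the paper's argument. You minimize $A(\lambda)-\lambda\cdot\mu$, get coercivity from the one-term bound $A(\lambda)\ge \lambda\cdot\phi(y)+\log a(y)$ with a direction-uniform margin over finitely many $y$'s (so that $\min_k\log a(y_k)>-\infty$), and conclude via $\nabla A(\lambda^*)=\mu_{\lambda^*}$. The only variation is how you produce the finite set: the paper invokes Carath\'eodory to place $\mu$ in the interior of the convex hull of finitely many $\phi(y_i)$, while you take a finite subcover of the unit sphere by the open sets $U_y$, which is fully spelled out and gives the same uniformity.
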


\begin{proof}
Set $f(\lambda)=A(\lambda)-\lambda\cdot\mu$.
By Remark~\ref{rem:gradient}, $\nabla f(\lambda)=\mu_\lambda-\mu$,
so it suffices to show that $f$ attains its minimum.

\medskip\noindent\emph{Growth at infinity.}
Since $\mu\in\mathrm{int}(M)\subset\operatorname{conv}(\phi(Y))$, by Carath\'eodory's theorem there
exist finitely many points $y_1,\dots,y_m\in Y$ such that
$\mu\in\mathrm{int}(\operatorname{conv}\{\phi(y_1),\dots,\phi(y_m)\})$.
Hence there exists $r>0$ with $B(\mu,r)\subset\operatorname{conv}\{\phi(y_1),\dots,\phi(y_m)\}$.

For any $\lambda\neq 0$, let $u=\lambda/\|\lambda\|$. Since $\mu+ru$ lies in
$\operatorname{conv}\{\phi(y_1),\dots,\phi(y_m)\}$, there exists some $i\in\{1,\dots,m\}$ with
\[
u\cdot\phi(y_i)\ge u\cdot\mu+r.
\]
Using this $y_i$ and the definition of $A$,
\[
A(\lambda)
\ge \lambda\cdot\phi(y_i)+\log a(y_i),
\]
and therefore
\[
f(\lambda)
\ge r\|\lambda\|+\log a(y_i)
\ge r\|\lambda\|+\min_{1\le i\le m}\log a(y_i).
\]
Thus $f(\lambda)\to+\infty$ as $\|\lambda\|\to\infty$.

\medskip\noindent\emph{Conclusion.}
Under the assumption $\Lambda=\R^d$, the function $A$ is smooth
(Remark~\ref{rem:gradient}), so $f$ is continuous. The linear lower bound shows that
for large enough $R$, the minimum of $f$ over all of $\R^d$ is achieved inside the
closed ball $\|\lambda\|\le R$; since $f$ is continuous on this compact set, it attains
its minimum there at some $\lambda^*$. Then $\nabla f(\lambda^*)=0$ gives
$\mu_{\lambda^*}=\mu$.
\end{proof}

The analogous result for exponential families on general measurable spaces is established in \cite[Chapter~3]{Brown1986}.

\section{Discussion}

This note is organized around a sharp separation between \emph{algebraic} consequences of the KL difference identity~\eqref{eq:identity} (Section~\ref{sec:algebraic}), which require nothing beyond $\KL{\cdot}{\cdot} \geq 0$, and \emph{analytic} complements (Section~\ref{sec:logpartition}), which additionally use the differentiability of $A$ on $\mathrm{int}(\Lambda)$.
The identity itself is a one-line calculation: the log-ratio $\log(p_{\lambda_1}/p_{\lambda_2})$ is affine in $\phi$, so its expectation under any $q$ involves only $A(\lambda_1)$, $A(\lambda_2)$, and $\mu_q$.

\smallskip

The purely algebraic results---the generalized three-point identity, the Pythagorean theorem, the I-projection and reverse I-projection characterizations, the convexity of $A$ and identification of $A^*$, the ELBO decomposition, and KL-regularized reward maximization---are all obtained by the same recipe: choose $q$, $\lambda_1$, $\lambda_2$ in the identity, and apply $\KL{\cdot}{\cdot} \geq 0$ with its equality characterization.

\smallskip

The analytic section adds one external input---differentiability of $A$---which enables the Bregman divergence interpretation, connects to the classical literature, and provides the gradient characterization $\nabla A(\lambda) = \mu_\lambda$ needed for the surjectivity proof:
\begin{itemize}[itemsep=3pt]
    \item The convexity of $A$ and the supporting hyperplane property are algebraic; promoting the slope $\mu_\lambda$ to a gradient requires differentiability, but a mild argument suffices: term-by-term differentiation for finite $Y$, or dominated convergence in general \cite{Brown1986,WainwrightJordan2008}.

    \item The generalized three-point identity~\eqref{eq:threepoint}, proved algebraically, is recognized as a generalization of the Bregman three-point property of \cite{Banerjee2005} once the Bregman interpretation is available.

    \item The Pythagorean theorem is classically developed using the dually flat structure of information geometry \cite{Amari1985,AmariNagaoka2000}.
    Here it is a direct consequence of the three-point identity applied to arbitrary distributions~$q$ outside the family, a generality not readily found in the standard Bregman divergence literature.
    A related Pythagorean property for I-projections was established by \cite{Csiszar1975}.

    \item Surjectivity of the moment map requires differentiability of $A$ together with a standard geometric argument about the interior of the moment space $\cM$.
\end{itemize}

\smallskip

We do not address the rich analytic structure of $A$ (higher-order cumulants, the Fisher information metric, analyticity on $\mathrm{int}(\Lambda)$), for which we refer to \cite{Brown1986,WainwrightJordan2008}.
Our aim is limited to showing how far a single algebraic identity can reach.

\smallskip

This note arose from the study of KL-regularized control problems, where the identity provides a natural framework for analyzing the geometry of optimal policies.
This connection will be developed elsewhere.

\section*{AI Disclosure}

The author used Claude (Anthropic, \texttt{claude.ai}) and ChatGPT (OpenAI) during the preparation of this manuscript for assistance with exposition, structuring arguments, and reviewing text and proof drafts. The author reviewed and edited all AI-assisted content and takes full responsibility for the content of this paper.


\end{document}